\def\b{\mathbf{b}}
\def\d{\mathbf{d}}
\def\A{\mathbf{A}}
\def\ii{\iota}
\def\i{i}
\def\j{j}
\def\Z{\mathbb{Z}}
\newtheorem{theorem}{Theorem}[section]
\newtheorem{proposition}[theorem]{Proposition}
\begin{document}

\title{On Fast Bilateral Filtering using  Fourier Kernels}
\author{Sanjay Ghosh, \textit{Student Member, IEEE}, and Kunal N. Chaudhury, \textit{Senior Member, IEEE}
\thanks{Address: Department of Electrical Engineering, Indian Institute of Science, Bangalore, India. Correspondence: kunal@ee.iisc.ernet.in.}}

\maketitle

\begin{abstract}
It was demonstrated in earlier work that, by approximating its range kernel using shiftable functions, the non-linear bilateral filter can be computed using a series of fast convolutions. 
Previous approaches based on shiftable approximation have, however, been restricted to Gaussian range kernels. In this work, we propose a novel approximation that can be applied to \textit{any} range kernel, provided it has a pointwise-convergent Fourier series. More specifically, we propose to approximate the Gaussian range kernel of the bilateral filter using a Fourier basis, where the coefficients of the basis are obtained by solving a series of least-squares problems. The coefficients can be efficiently computed using a recursive form of the QR decomposition. By controlling the cardinality of the Fourier basis, we can obtain a good tradeoff between the run-time and the filtering accuracy.  In particular, we are able to guarantee sub-pixel accuracy for the overall filtering, which is not provided by most existing methods for fast bilateral filtering. We present simulation results to demonstrate the speed and accuracy of the proposed algorithm.
\end{abstract}

\begin{keywords}
bilateral filter, shiftability, Fourier basis, fast algorithm, accuracy.
\end{keywords}

\IEEEpeerreviewmaketitle

\section{Introduction}

The bilateral filter was introduced by Tomasi and Manduchi in \cite{Tomasi1998} as a non-linear extension of the classical Gaussian filter.
The bilateral filter employs a range kernel along with a spatial kernel for performing edge-preserving smoothing of images. 
Since its introduction, the bilateral filter has found widespread applications in image processing, computer graphics, computer vision, and 
computational photography \cite{Bennett2005} - \cite{optflow}. 

In this paper, we will consider a general form of the bilateral filter 
where an arbitrary kernel is used for the range filtering, and a box or Gaussian kernel is used for the spatial filtering \cite{Tomasi1998}. 
In particular, consider an image $f:I \rightarrow \mathbb{R}$, where $I \subset \Z^2$ is a finite rectangular lattice. The output  of the bilateral filter $ f_{\mathrm{BF}} : I \rightarrow \mathbb{R}$ is given by
\begin{equation}
\label{BF}
 f_{\mathrm{BF}}(\i)=  \frac{\sum_{\j \in \Omega} w(\j) \  \phi\big(f(\i-\j)-f(\i) \big) \ f(\i-\j)}{\sum_{\j \in \Omega} w(\j)  \  \phi\big(f(\i-\j)-f(\i) \big)  },
\end{equation}
where $\phi(t)$ is the range kernel and $w(\i)$ is the spatial kernel. The spatial kernel is usually a Gaussian \cite{Tomasi1998},
\begin{equation}
\label{spatial_kernel1}
w(\i) = \exp\left(- \lVert \i \rVert^2/2\sigma_s^2\right).
\end{equation}
The window $\Omega$ of the spatial kernel  is a local neighbourhood of the origin. For example, $\Omega=[-W,W]^2$ for the Gaussian kernel, where $W=3\sigma_s$. 
The original proposal in \cite{Tomasi1998} was to use a Gaussian range kernel given by
\begin{equation}
\label{range_kernel}
\phi(t) = \exp\left(- t^2/2\sigma_r^2\right).
\end{equation}
In more recent work, exponential range kernels have been used \cite{Gunturk2011,Mirbach2012,Yang2014}. 

The direct computation of \eqref{BF} requires $O(W^2)$ operations per pixel. In fact, the direct computation is slow for practical settings of $W$ \cite{Durand2002}.  To address this issue, researchers have come up with various fast  algorithms \cite{Durand2002} - \cite{Chaudhury2015}. While some of these algorithms can reduce the complexity to $O(1)$ operations per pixel for any arbitrary $W$, there is, however, no available guarantee on the approximation quality that can be achieved using these algorithms. In fact, as reported in \cite{Chaudhury2011}, a poor approximation can lead to visible distortions in the filtered image. Only recently, a quantitative analysis of Yang's fast algorithm was presented in \cite{An2015}.

In Section \ref{sec:SBF}, we recall the idea of constant-time bilateral filtering using Fourier (complex exponential) kernels  \cite{Chaudhury2011a}.
In this work, we build on this idea to propose a new algorithm for approximating \eqref{BF} using the shiftable Fourier basis. The contribution of this work is not the fast algorithm itself, but rather the approximation scheme  in Section \ref{sec:PFA}, and the subsequent approximation guarantee in Section \ref{sec:FA}. The approximation scheme can be applied to \textit{any} arbitrary range kernel that has a  pointwise-convergent Fourier series. In this respect, we note that all previous approaches based on shiftable approximation were restricted to Gaussian range kernels \cite{Chaudhury2011,Chaudhury2011a,Kamata2015}. We provide some representative results concerning the speed and accuracy of the resulting algorithm in Section \ref{sec:Sim}, where we also compare the empirical accuracy of the filtering with the bounds predicted by our analysis.

\section{Shiftable Bilateral Filtering}
\label{sec:SBF}

It was demonstrated in \cite{Chaudhury2011,Chaudhury2011a} that the bilateral filter can be decomposed into a series of Gaussian convolutions using shiftable functions. 
In particular, since our present interest is in the shiftable complex exponential, consider the function 
\begin{equation}
\label{compExp}
\varphi_N(t) = \sum_{n=-N}^N \!\! c_n \exp(\ii n \omega t),
\end{equation}
where $\ii^2=-1$. By setting \eqref{compExp} as the range kernel $\phi(t)$, we can decompose the numerator in \eqref{BF} as
\begin{equation*}
\sum_{n=-N}^N \!\! c_n \exp\! \big(\!-\ii n w f(\i) \big) F_n(\i),
\end{equation*}
where 
\begin{equation}
\label{gauss}
F_n(\i)= \sum_{\j \in \Omega} w(\j) f(\i-\j) \exp\big(\ii n \omega f(\i-\j)\big).
\end{equation}
It is clear that a similar decomposition can be obtained for the denominator of \eqref{BF}.
We readily recognize \eqref{gauss} to be a Gaussian convolution. As is well-known, the Gaussian convolution in \eqref{gauss}  can be efficiently implemented at constant-time complexity (with respect to $\sigma_s$) using separability and recursion \cite{Deriche1993}.  In summary, we can decompose the bilateral filtering into a series of Gaussian filtering. The fast \textit{shiftable} algorithm resulting from this decomposition is summarized in Algorithm \ref{algo1}. We use $G^{\ast}(\i)$  in line \ref{conj} to denote the complex-conjugate of $G(\i)$. In line \ref{conv}, we use $\bar{F}$ and $\bar{G}$ to denote the Gaussian filtering of the images $F(\i)$ and $G(\i)$. To avoid confusion, we note that the formal structure of Algorithm \ref{algo1} is somewhat different from that of the shiftable algorithms in \cite{Chaudhury2011,Chaudhury2011a}. While the cosine and sine components of the complex exponential were used in \cite{Chaudhury2011,Chaudhury2011a}, we work directly with the complex exponential in Algorithm \ref{algo1}. Note that we have abused notation in using $f_{\text{BF}}(\i)$ to denote the shiftable approximation of \eqref{BF} in Algorithm \ref{algo1}.

\IncMargin{1.5mm}
\begin{algorithm}
\KwData{Image $f: I \rightarrow \mathbb{R}$\; 
\textbf{Parameters}:  Filter $w(\i)$, and $\omega$, $N$, and $(c_n)_{-N \leq n \leq N}$.}
\KwResult{Shiftable approximation of \eqref{BF}.}
\textbf{Initialize}: Set $P(\i)=0$ and $Q(\i)=0$ for all $\i \in I$\;
 \For{$n=-N,\ldots,N$}{
$G(\i) =\exp\left(\imath n \omega f(\i)\right)$ for $\i \in I$\;
$F(\i) =G(\i) f(\i)$ for $\i \in I$\; 
$H(\i)=c_n G^{\ast}(\i)$ for $\i \in I$\;   \label{conj}
Compute $\bar{G} = F \ast w$ and $\bar{G}  = G \ast w$\; \label{conv}
$P(\i)=P(\i) + H(\i) \cdot \bar{F} (\i)$ for $\i \in I$\; 
$Q(\i)=Q(\i)+ H(\i) \cdot  \bar{G} (\i)$ for $\i \in I$\; 
}
Set $f_{\text{BF}}(\i)=P(\i)/Q(\i)$ for all $\i \in I$.
\caption{Shiftable Bilateral Filtering.}
\label{algo1}
\end{algorithm}
\DecMargin{0.5mm}

If the range kernel is not shiftable, one can approximate it using a shiftable function.
For example, the shiftable raised-cosines were used in \cite{Chaudhury2011} to approximate the Gaussian kernel.
Shiftable approximation using polynomials was later presented in \cite{Chaudhury2011a}. More recently, the classical Fourier basis was used for this purpose in \cite{Kamata2015}. The above approximations, however, come with the following shortcomings: \newline
$\bullet$ They are customized to work with the Gaussian kernel, and cannot be extended to general range kernels, such as the exponential kernel \cite{Mirbach2012,Yang2014}. Even for the Gaussian kernel, the proposal in \cite{Kamata2015} requires one to compute the  coefficients of the Fourier series. This is computationally intensive (e.g., requires numerical integration, or some analytical properties particular to the kernel), and cannot be done on-the-fly. Indeed, the authors in \cite{Kamata2015} work with an approximation of the Fourier coefficients, which is only valid for small $\sigma_r$. \newline
$\bullet$  Notice that, in most applications of the bilateral filter, the argument $t$ in \eqref{range_kernel} assumes discrete values. This should be taken into consideration while designing the shiftable approximation. The approximations in \cite{Chaudhury2011,Kamata2015}, however, do not necessarily guarantee that the approximation error at these discrete points are within some user-defined tolerance. This makes it difficult to quantify the overall filtering accuracy. 
In this paper, we propose a rather simple optimization principle, which has an efficient implementation. This provides us with the desired control on the numerical accuracy of the overall filtering.

\section{Progressive Fourier Approximation}
\label{sec:PFA}

We now explain how the above shortcomings can be fixed. 
As noted above, the argument $t$ in \eqref{range_kernel} takes on the values $|f(\i -\j) - f(\j)|$ as $\i$ and $\j$ varies over the image. In particular, $t$ takes values in $\Lambda_T = \{0,1,\ldots,T\}$, where
\begin{equation*}
T = \max_{\i \in I} \ \max_{\j \in \Omega} \ |f(\i-\j) - f(\j)|. 
\end{equation*}
Thus, $T$ is the dynamic range of the image measured over the window $\Omega$, which is typically smaller than the full dynamic range. We can compute $T$ using the fast algorithm in \cite{Chaudhury2013}; the run-time of the algorithm does not depend on the size of $\Omega$. Without loss of generality, we assume that the range kernel  $\phi(t)$ is symmetric. The problem is that of approximating $\phi(t)$ using a shiftable function over the half-interval $[0,T]$.  We propose to use the shiftable Fourier basis for this purpose. In particular, we fix some order $N \geq 1$, and consider the shiftable function
\begin{equation}
\label{FourierExp}
\varphi_N(t) = d_0 + \sum_{n=1}^N d_n \cos \left(n\omega t \right),
\end{equation}
where $\omega = \pi/T$.  As is well-known, using the identity $\cos\theta = (\exp(\ii \theta) + \exp(-\ii \theta))/2$, we can write \eqref{FourierExp} as in \eqref{compExp}, where $c_0=d_0$, and $c_{\pm n}=(1/2)d_n$ for $n=1,\ldots,N$. The key difference with \cite{Kamata2015} is with respect to the rule used to set the coefficients $d_0,\ldots,d_N$ in \eqref{FourierExp}. These are set to be the standard Fourier coefficients of $\phi(t)$ in \cite{Kamata2015}. 
In keeping with the arguments presented in earlier, we take a different approach and instead try to minimize the error $\phi(t) - \varphi_N(t)$ at the discrete points $t \in \Lambda_T$. In particular, we consider the problem of finding $d_0,\ldots,d_N$ that minimizes the gross error
\begin{equation}
\label{grossError}
\sum_{t \in \Lambda_T} \ \big(\phi(t) - \varphi_N(t)\big)^2.
\end{equation}
This is the classical linear least-squares problem, where the unknowns are $d_0,\ldots,d_N$. Indeed, using matrix-notation, we can write \eqref{grossError} as $\lVert \b - \A \d \rVert^2$, where $\d=(d_0,\ldots,d_N)$, $\b$ is the discretization of $\phi(t)$ at the points $t \in \Lambda_T$, and the columns of $\A$ are the corresponding discretization of the basis functions in \eqref{FourierExp}. In particular, let us denote 
\begin{equation}
\label{lsqlin}
\mathcal{E}_N = \min_{\d \in \mathbb{R}^{N+1}} \ \lVert \b - \A \d \rVert^2.
\end{equation}
The following fact is the basis of our approximation algorithm to be discussed next.
\begin{proposition}[Decay of Error]
\label{prop}
Assume that the Fourier series of the range kernel converges pointwise on the interval $[-T,T]$. That is, for $t \in [-T,T]$,
\begin{equation*}
\lim_{N \rightarrow \infty}  \varphi_N(t)  = \phi(t),
\end{equation*}
where $d_0,\ldots,d_N$ in \eqref{FourierExp} are the Fourier coefficients of $\phi(t)$. Then $\mathcal{E}_N$ decays to zero as $N \rightarrow \infty$.
\end{proposition}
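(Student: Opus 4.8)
The plan is to exploit the optimality of the least-squares solution to obtain a convenient upper bound on $\mathcal{E}_N$, and then to pass to the limit using the finiteness of the sampling set $\Lambda_T$. The central observation is that $\mathcal{E}_N$ in \eqref{lsqlin} is the \emph{minimum} of $\lVert \b - \A \d \rVert^2$ taken over all $\d \in \mathbb{R}^{N+1}$. Consequently, any particular choice of coefficients furnishes an upper bound, and the natural choice is the Fourier coefficients of $\phi(t)$ themselves. Writing $\d^{\star}=(d_0,\ldots,d_N)$ for the vector of these Fourier coefficients, and noting that the columns of $\A$ are the discretizations of $1,\cos(\omega t),\ldots,\cos(N\omega t)$ at the points $t \in \Lambda_T$, we have $(\A \d^{\star})(t)=\varphi_N(t)$ and $\b(t)=\phi(t)$, whence
\[
\mathcal{E}_N \ \leq \ \lVert \b - \A \d^{\star} \rVert^2 \ = \ \sum_{t \in \Lambda_T} \big(\phi(t) - \varphi_N(t)\big)^2,
\]
where $\varphi_N$ on the right is built from the Fourier coefficients, exactly as in the hypothesis of the proposition.

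Next I would observe that $\Lambda_T = \{0,1,\ldots,T\}$ is a \emph{finite} set, so the right-hand side above is a finite sum. By the assumed pointwise convergence of the Fourier series on $[-T,T]$, each summand $\big(\phi(t) - \varphi_N(t)\big)^2$ tends to zero as $N \to \infty$, for every fixed $t \in \Lambda_T$. Since a finite sum of sequences each converging to zero itself converges to zero, the upper bound vanishes in the limit. Combining this with the trivial lower bound $\mathcal{E}_N \geq 0$ gives a squeeze, and $\mathcal{E}_N \to 0$ follows. One may additionally note that $\mathcal{E}_N$ is monotonically non-increasing in $N$, since enlarging the Fourier basis can only lower the least-squares error, but the squeeze argument alone already suffices.

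I do not anticipate a serious analytic obstacle here, as the proposition is essentially a soft statement. The one point requiring care, and the genuine crux of the argument, is the \emph{finiteness} of $\Lambda_T$: it is precisely this finiteness that allows mere pointwise convergence of the Fourier series to be upgraded to convergence of the summed error. Had the approximation error instead been measured as an integral over the continuum $[0,T]$, pointwise convergence would no longer be enough, and one would have to invoke uniform convergence or a dominated-convergence argument to justify interchanging the limit with the integral.
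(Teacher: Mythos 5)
Your proposal is correct and follows essentially the same route as the paper: bound $\mathcal{E}_N$ by the gross error of the Fourier-coefficient choice via least-squares optimality, then let pointwise convergence kill the (finite) sum. The only difference is that you make explicit the finite-sum step that the paper leaves implicit, which is a worthwhile clarification but not a different argument.
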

\begin{proof}
Indeed, let $e_N$ be the error in \eqref{grossError} when $\varphi_N(t)$ is taken to be the $N$-th order Fourier approximation of $\phi(t)$. Then, by optimality, we have $\mathcal{E}_N \leq e_N$. Since, by assumption, $e_N \rightarrow 0$ as $N \rightarrow \infty$, the proposition follows. 
\end{proof}
We note that the Fourier series converges pointwise for any continuously-differentiable function, e.g., Gaussian and polynomials. Convergence is also guaranteed for functions that are continuous and piecewise-differentiable \cite{Grafakos}, such as the exponential. Thus, the assumption in Proposition \ref{prop} covers the commonly used kernels \cite{Tomasi1998,Mirbach2012,Yang2014}.

Proposition \ref{prop} suggests the following numerical scheme: We fix some user-defined tolerance $\varepsilon^2$. We begin with $N=1$, and solve \eqref{lsqlin} to get $\mathcal{E}_N$. If $\mathcal{E}_N < \varepsilon^2$, we stop. Else, we increase $N$ by one and proceed, until $\mathcal{E}_N \leq \varepsilon^2$. In other words, we solve a series of least-squares problems, where the basis matrix $\A$ at each step is obtained by augmenting the $\A$ in the previous step. The whole process can be efficiently implemented  using a recursive version of the modified QR algorithm \cite{Dummel}. The main idea is that \eqref{lsqlin} can be computed by solving $\mathbf{R} \d = \mathbf{Q}^T \b$ using back-substitution, where $\A=\mathbf{Q} \mathbf{R}$ is the QR-decomposition of $\A$.
In the recursive computation, $\mathbf{Q},\mathbf{R}$, and $ \mathbf{Q}^T \b$ at each iteration is computed from the corresponding quantities in  the previous iteration using cheap operations.
An adaptation of this recursive algorithm to our problem is provided in Algorithm \ref{algo2}. 
In steps \ref{sample1} and \ref{sample2}, we discretize the kernel and the incoming Fourier basis. 
In step \ref{rk}, $r_k$ denotes the $k$-th component of $\mathbf{r}$.

\section{Filtering Accuracy}
\label{sec:FA}

Suppose we are given a range kernel $\phi(t)$ and tolerance $\varepsilon$. We compute the approximation order $N$ and the corresponding coefficients $d_0,\ldots,d_N$ using Algorithm \ref{algo2}. This gives us the corresponding kernel $\varphi_N(t)$ in \eqref{compExp}, which is used to approximate \eqref{BF} using Algorithm \ref{algo1}. In particular, the approximation provided by Algorithm \ref{algo1} is given by
\begin{equation}
\label{SBF}
\hat{f}_{\mathrm{BF}}(\i)=  \frac{\sum_{\j \in \Omega} w(\j)   \varphi_N \big(f(\i-\j)-f(\i) \big)  f(\i-\j)}{\sum_{\j \in \Omega} w(\j)   \varphi_N \big(f(\i-\j)-f(\i) \big)  }.
\end{equation}
By construction, for all $t \in \Lambda_T$,
\begin{equation}
\label{eps}
\lvert \phi(t)- \varphi_N(t)  \rvert \leq \varepsilon.
\end{equation}
Similar to \cite{An2015}, we consider the $\ell_{\infty}$ (worst-case) error
\begin{equation}
\label{err_infinity}
\lVert  f_{\mathrm{BF}} -  \hat{f}_{\mathrm{BF}} \rVert_{\infty} = \max \big\{ |f_{\mathrm{BF}}(\i) -   \hat{f}_{\mathrm{BF}}(\i)| :  \ \i \in I \big\}.
\end{equation}
Our goal is to bound \eqref{err_infinity}, which provides us with an estimate of the pixelwise difference between the outputs of the exact and the approximate bilateral filter. In fact, a simple analysis (cf. Appendix) give us the following result.
\begin{proposition}[Filtering Accuracy]
\begin{equation}
\label{accuracy}
\lVert  f_{\mathrm{BF}} - \hat{f}_{\mathrm{BF}} \rVert_{\infty}  \leq \frac{2T \varepsilon  }{w(0) -\varepsilon }.
\end{equation}
\end{proposition}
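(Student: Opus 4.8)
The plan is to reduce the $\ell_\infty$ quantity \eqref{err_infinity} to a per-pixel estimate and to exploit the fact that both $f_{\mathrm{BF}}(\i)$ and $\hat{f}_{\mathrm{BF}}(\i)$ are weighted averages of the intensities $\{f(\i-\j) : \j \in \Omega\}$. Fix a pixel $\i$ and write $a_\j = w(\j)\,\phi(f(\i-\j)-f(\i))$ and $\hat{a}_\j = w(\j)\,\varphi_N(f(\i-\j)-f(\i))$, with normalization factors $Q = \sum_\j a_\j$ and $\hat{Q} = \sum_\j \hat{a}_\j$. I first normalize the spatial kernel so that $\sum_{\j \in \Omega} w(\j) = 1$; this leaves \eqref{BF} and \eqref{SBF} unchanged, since each is a ratio, and $w(0)$ in \eqref{accuracy} is then understood as the resulting central weight. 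The normalized weights $\alpha_\j = a_\j/Q$ and $\hat{\alpha}_\j = \hat{a}_\j/\hat{Q}$ then each sum to one.

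The first step uses this normalization to kill the dependence on the absolute intensity scale. Since $\sum_\j (\alpha_\j - \hat{\alpha}_\j) = 0$, I may subtract the constant $f(\i)$ inside the average and write $f_{\mathrm{BF}}(\i) - \hat{f}_{\mathrm{BF}}(\i) = \sum_\j (\alpha_\j - \hat{\alpha}_\j)\,(f(\i-\j)-f(\i))$. Because $|f(\i-\j)-f(\i)| \leq T$ for every $\j \in \Omega$ by the definition of $T$, this yields $|f_{\mathrm{BF}}(\i) - \hat{f}_{\mathrm{BF}}(\i)| \leq T \sum_\j |\alpha_\j - \hat{\alpha}_\j|$, so the problem is reduced to bounding the $\ell_1$ distance between the two normalized weight vectors.

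Next I would control $\sum_\j |\alpha_\j - \hat{\alpha}_\j|$ via the identity $\alpha_\j - \hat{\alpha}_\j = \alpha_\j (\hat{Q}-Q)/\hat{Q} + (a_\j - \hat{a}_\j)/\hat{Q}$. Summing and using $\sum_\j \alpha_\j = 1$ gives $\sum_\j |\alpha_\j - \hat{\alpha}_\j| \leq (|\hat{Q}-Q| + \sum_\j |a_\j - \hat{a}_\j|)/\hat{Q}$. The pointwise guarantee \eqref{eps} gives $|a_\j - \hat{a}_\j| = w(\j)\,|\phi - \varphi_N| \leq \varepsilon\,w(\j)$, so both $\sum_\j|a_\j-\hat{a}_\j|$ and $|\hat{Q}-Q|$ are at most $\varepsilon \sum_\j w(\j) = \varepsilon$. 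Hence $\sum_\j |\alpha_\j - \hat{\alpha}_\j| \leq 2\varepsilon/\hat{Q}$, and combining with the previous step gives $|f_{\mathrm{BF}}(\i) - \hat{f}_{\mathrm{BF}}(\i)| \leq 2T\varepsilon/\hat{Q}$.

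The remaining, and most delicate, step is a lower bound on the perturbed denominator $\hat{Q}$, which is where the nonlinearity of the ratio really bites, since $\varphi_N$ need not be nonnegative. Here I would again invoke \eqref{eps}: $\hat{Q} = \sum_\j w(\j)\varphi_N \geq \sum_\j w(\j)\phi - \varepsilon\sum_\j w(\j) = Q - \varepsilon$, and then bound $Q$ from below by its central term, $Q \geq w(0)\,\phi(0) = w(0)$ (every summand is nonnegative and $\phi(0)=1$). This gives $\hat{Q} \geq w(0) - \varepsilon$, and substituting yields \eqref{accuracy}. The hypothesis worth flagging is that the estimate is meaningful only when $w(0) > \varepsilon$, i.e. the tolerance must stay below the (normalized) central weight so that $\hat{Q}$ remains bounded away from zero. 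Taking the maximum over $\i \in I$ completes the argument.
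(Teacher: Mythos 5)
Your proof is correct, and it reaches the paper's bound by a genuinely different decomposition. The paper writes $f_{\mathrm{BF}} = P_1/Q_1$ and $\hat{f}_{\mathrm{BF}} = P_2/Q_2$ and expands the difference as $\frac{1}{Q_2}\big[f_{\mathrm{BF}}(Q_2-Q_1) + P_1 - P_2\big]$, then bounds the pieces separately: $\lVert Q_1 - Q_2\rVert_{\infty} \leq \varepsilon$, $\lVert P_1 - P_2\rVert_{\infty} \leq T\varepsilon$ (justified via $\lVert f_{\mathrm{BF}}\rVert_{\infty} \leq T$), and $Q_2 \geq w(0)-\varepsilon$. You instead treat both outputs as weighted averages, reduce the pixel error to $T$ times the $\ell_1$ distance between the normalized weight vectors, and bound that distance by $2\varepsilon/\hat{Q}$; the kernel bound \eqref{eps} and the denominator bound $\hat{Q} \geq w(0)-\varepsilon$ are the shared ingredients, and the constants come out identical. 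Your route has a concrete advantage: the centering step $f_{\mathrm{BF}}(i)-\hat{f}_{\mathrm{BF}}(i) = \sum_{j}(\alpha_j - \hat{\alpha}_j)\big(f(i-j)-f(i)\big)$, which is legitimate because the two weight vectors have equal sums, lets you invoke only $|f(i-j)-f(i)| \leq T$, and that is exactly what the definition of $T$ (the \emph{windowed} dynamic range) supplies. By contrast, the paper's intermediate claims $\lVert f_{\mathrm{BF}}\rVert_{\infty} \leq T$ and $\lVert P_1 - P_2\rVert_{\infty} \leq T\varepsilon$ implicitly require the intensities themselves to be bounded by $T$, which does not follow from the definition of $T$ (consider an image with values in $[100,\,100+T]$); so your argument is rigorous under strictly weaker assumptions while yielding the same inequality. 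Two minor points: logically you should establish $\hat{Q} \geq w(0)-\varepsilon > 0$ \emph{before} dividing by $\hat{Q}$ to define $\hat{\alpha}_j$, and the summation step $\sum_j \alpha_j |\hat{Q}-Q|/\hat{Q} = |\hat{Q}-Q|/\hat{Q}$ silently uses $\alpha_j \geq 0$ (true, since $w$ and $\phi$ are nonnegative); both are trivial to patch, and you do correctly flag the hypothesis $\varepsilon < w(0)$ that the paper also needs.
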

In other words, the filtering error is essentially within a certain factor of the kernel approximation error $\varepsilon$.
To arrive at  \eqref{accuracy}, we have assumed that the weights of the spatial filter add up to unity.
Indeed, this assumption can be made since the spatial filter appears in both  the numerator and denominator of \eqref{BF} and \eqref{SBF}.

\IncMargin{1.5mm}
\begin{algorithm}[!htp]
\KwData{Kernel $\phi(t)$, half-period $T$, and tolerance $\varepsilon$.}
\KwResult{$N$ and $\d \in \mathbb{R}^N$.}
$\omega = \pi/T$\;
$\Lambda_T = \{0,1,\ldots,T\}$\;
$\b = [\phi(t)]_{t \in \Lambda_T} \in \mathbb{R}^{T+1}$\; \label{sample1}
\textbf{Initialize}: 
$N=1$\;
Set $\mathbf{a} \in \mathbb{R}^{T+1}$ to be the all-ones vector\;
$\A = \mathbf{a}$\;
 $\mathbf{R} = \lVert \mathbf{a} \rVert$\;
$\mathbf{Q} = \mathbf{a}/\mathbf{R}$\;
$\mathbf{p} = \mathbf{Q}^T \b$\;
$\mathcal{E}= \lVert \b -  \mathbf{Q}  \mathbf{p}  \rVert$\;
\While{$\mathcal{E} > \varepsilon$}{
Set $\mathbf{r} \in \mathbb{R}^{N}$ to be the all-zeros vector\;
$N=N+1$\;
$\mathbf{a} = [\cos \left(N \omega t \right)]_{t \in \Lambda_T} \in \mathbb{R}^{T+1}$\; \label{sample2}
$\A = [\A \ \lvert \ \mathbf{a}]$\;
\For{$k=1,\ldots,N-1$}{
Set $\mathbf{q}_k$ to be the $k$-th column of $\mathbf{Q}$\;
$r_k =  \mathbf{a}^T\mathbf{q}_k$\;   \label{rk}
$\mathbf{a} = \mathbf{a}  - \mathbf{r}_k \mathbf{q}_k$\;
}
$r_N = \lVert \mathbf{a} \rVert$\;
$\mathbf{a}= (1/r_N)\mathbf{a}$\;
$\mathbf{Q} = [\mathbf{Q} \ \lvert \  \mathbf{a}] \in \mathbb{R}^{(T+1) \times N}$\;
$\mathbf{p} = [\mathbf{p} \ \lvert \ \mathbf{a}^T \b] \in \mathbb{R}^{N}$\;
Add a row of zeros to $\mathbf{R}$\;
$\mathbf{R} = [\mathbf{R} \ \lvert \ \mathbf{r}]$\; 
Solve $\mathbf{R}\d = \mathbf{p}$ using back-substitution\;
$\mathcal{E}= \lVert \b - \A \d\rVert$\;
}
\caption{Progressive Fourier Approximation.}
\label{algo2}
\end{algorithm}
\DecMargin{1.5mm}

\section{Simulation and Conclusion}
\label{sec:Sim}

All simulations reported here were performed using Matlab 8.4 on a MacBook Air with 1.3 GHz Intel Core i5 processor and 4 GB memory. The typical run-time of Algorithm \ref{algo2} was between $1$-$15$ milliseconds (depending on the order $N$) for the simulations reported in this section. This is a small fraction of the overall run-time of Algorithm \ref{algo1}. Indeed, the time required to filter a single $512 \times 512$ image with a Gaussian kernel is already about $20$ milliseconds. In Figure \ref{approx1}, we give an example of the approximation result obtained using Algorithm \ref{algo2} with $\varepsilon=1\text{e-}3$. In Figure \ref{approx2}, we compare the coefficients obtained using Algorithm \ref{algo2} with that obtained by expanding the raised-cosines \cite{Chaudhury2011} into \eqref{compExp}. Notice that the former decays much more rapidly and hence requires fewer terms.

\begin{figure}[!htp]
  \centering
   \subfloat[]{\includegraphics[width=0.48 \linewidth]{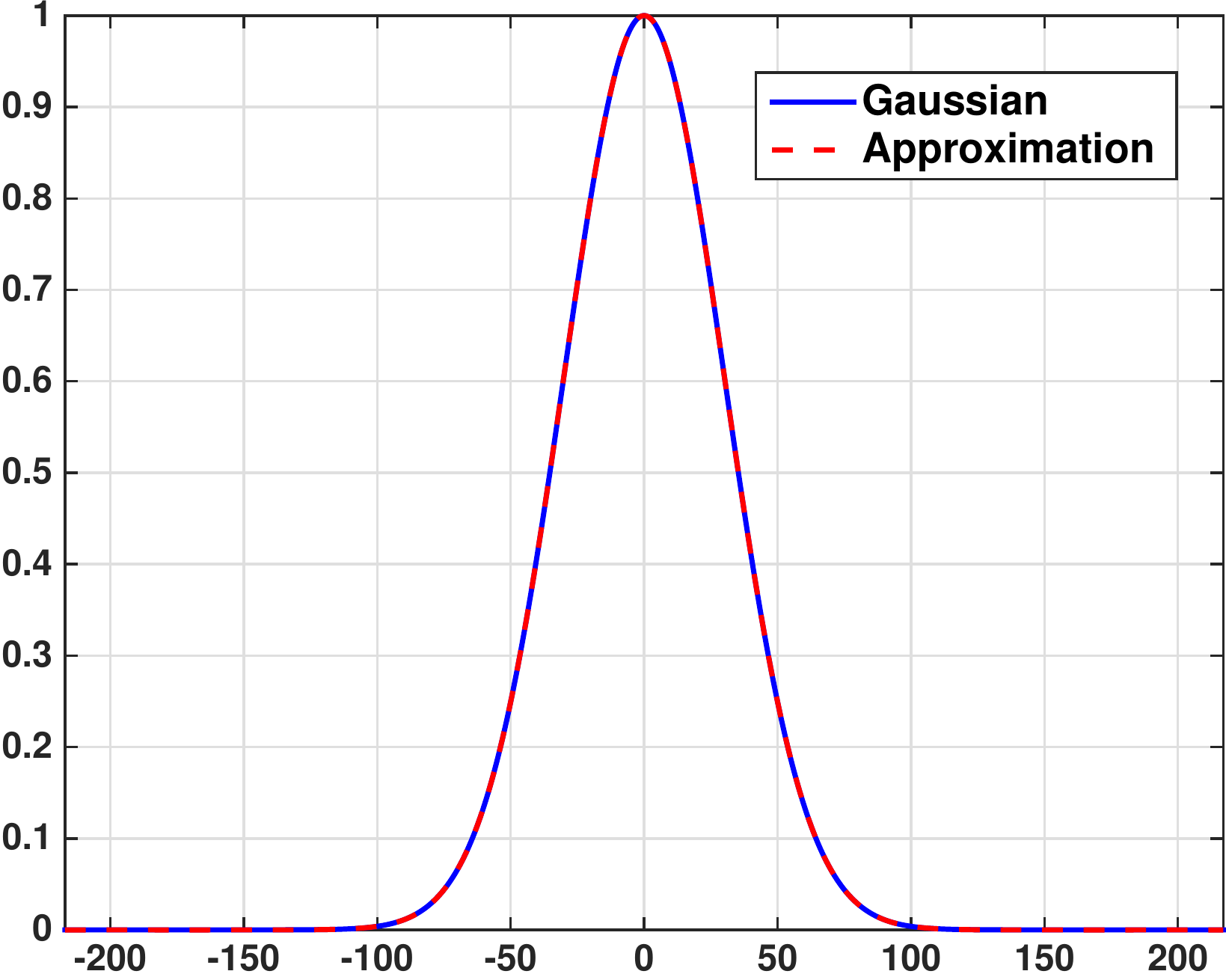}} \hspace{-0.5mm}
  \subfloat[]{\includegraphics[width= 0.492 \linewidth]{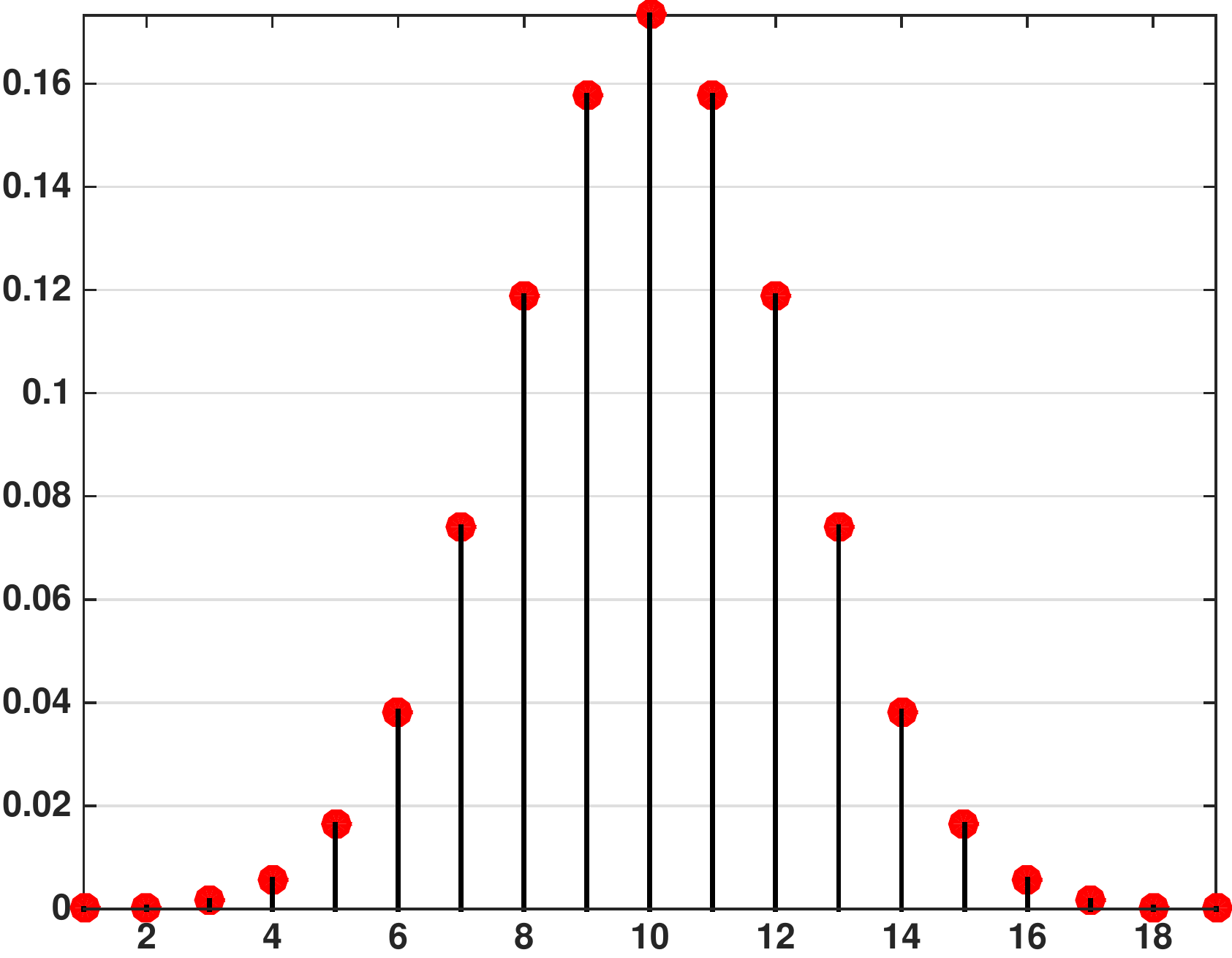}} 
  \caption{Left: Target Gaussian $(\sigma_r = 30)$ and the Fourier approximation ($N=9$) obtained using Algorithm \ref{algo1}. Right: Coefficients $c_{-9},\ldots,c_9$.}
    \label{approx1}
\end{figure} 

\begin{figure}[!htp]
  \centering
\includegraphics[width=0.7 \linewidth]{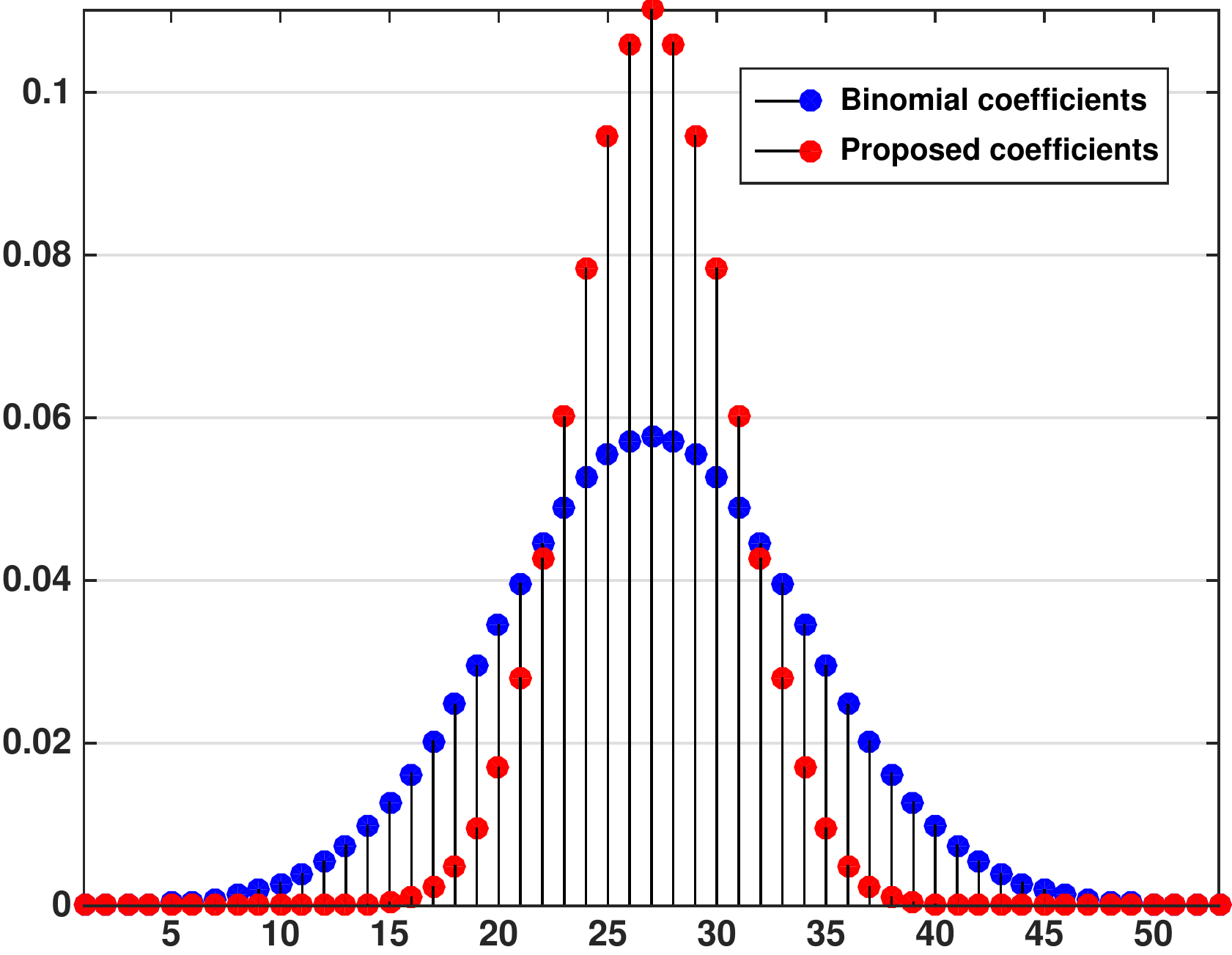}
  \caption{Comparison of the normalized binomial coefficients from \cite{Chaudhury2011} with that obtained using Algorithm \ref{algo2} ($\varepsilon=1\text{e-}3$) when $\sigma_r = 10$.}
    \label{approx2}
\end{figure} 

We present some results on the \textit{Barbara} image for which $T$ was computed to be $217$. We note that the run-time of the direct implementation of \eqref{BF} depends only on the image size and $\sigma_s$. On the other hand, the run-time of the proposed algorithm depends on $T$, tolerance $\varepsilon$, image size, and $\sigma_r$. The fact that the run-time is almost independent of $\sigma_s$ (constant-time algorithm) is evident from the results in Table \ref{table1}. The small fluctuations are essentially due to the variable padding required to handle the boundary conditions for the spatial filtering. 
 
\begin{table}[!htp]
\centering
\caption{Run-time for the $512 \times 512$ \textit{Barbara} image at different $\sigma_s$ and fixed $\sigma_r=30$. The run-time of the direct implementation is $95$ seconds.}
\label{table1}
\begin{tabular}{|c |p{0.6cm}|p{0.6cm}|p{0.6cm}|p{0.6cm}|p{0.6cm}|p{0.6cm}|p{0.6cm}|}
 \hline
$\sigma_s$   &1      &2  &5 &8 &10 &12   \\ \hline
Fast ($\varepsilon=1\mbox{e-}3$) &630ms  &635ms &638ms &640ms &645ms &650ms  \\ \hline
\end{tabular}
\end{table} 

The run-time of the proposed algorithm scales inversely with $\sigma_r$, which was also observed for the shiftable filtering in \cite{Chaudhury2011,Chaudhury2013,Kamata2015}. In particular, as $\sigma_r$ gets small, the Gaussian  range kernel tends to a Dirac-like distribution \cite{Grafakos}. As is well-known, the Dirac distribution is formally composed of all frequencies. The implication of this fact is that a large $N$ is required to approximate the kernel for small $\sigma_r$, and hence the increase in run-time. This is demonstrated with an example in Table \ref{table2}. However, notice that even for small $\sigma_r$, the proposed algorithm is much faster than the exact implementation. For $\sigma_r > 20$, the speedup is by a couple of orders.

\begin{table}[!htp]
\centering
\caption{Same as in Table \ref{table1}, except that $\sigma_r$ is varied and $\sigma_s=3$. The run-time of the direct implementation is $95$ seconds.}
\label{table2}
\begin{tabular}{|c |p{0.6cm}|p{0.6cm}|p{0.6cm}|p{0.6cm}|p{0.6cm}|p{0.6cm}|p{0.6cm}|}
 \hline
$\sigma_r$   &10      &15  &20 &30  &50 &100   \\ \hline
Fast ($\varepsilon=1\mbox{e-}3$) &2.1s  &1.5s &840ms &634ms &450ms &200ms  \\ \hline
\end{tabular}
\end{table} 

In Table \ref{table3}, we present the variation of run-time with tolerance $\varepsilon$ for a fixed filter setting. It is seen that the order $N$ and hence the run-time changes rather slowly with $\varepsilon$ (almost logarithmically in $1/\varepsilon$). We have however not been able to establish this empirical fact, which is deeply tied to the working of Algorithm \ref{algo2}.
We next compare the bound in \eqref{accuracy} with the actual $\ell_{\infty}$ error for the \textit{Barbara} image in Table \ref{table4}. We note that the error is within the predicted bound. 
In fact, we are able to predict sub-pixel accuracy when $\varepsilon<1\text{e-}5$. The bounds are, however, far from being tight. One of the reasons for this is that we have not incorporated any information about the local intensity distribution into our analysis. Derivation of a tighter bound will require a more sophisticated analysis. The present work is a first step in that direction. To best of our knowledge, with the exception of \cite{Yang2009}, this is the only approximation algorithm that comes with a provable guarantee on the filtering accuracy.

\begin{table}[!htp]
\centering
\caption{Variation of the run-time with $\varepsilon$ for the \textit{Barbara} image when $\sigma_s=3$ and $\sigma_r=30$. Also shown is the order $N$.}
\label{table3}
\begin{tabular}{|c |p{0.6cm}|p{0.6cm}|p{0.6cm}|p{0.6cm}|p{0.6cm}|p{0.6cm}|}
 \hline
$\varepsilon$   &1e-5  &1e-4 &1e-3  &0.01 &0.1   \\ \hline
$N$   &12 &11  &10 &8  &7    \\ \hline
Fast ($\varepsilon=1\mbox{e-}3$) &910ms &850ms &780ms &715ms &670ms  \\ \hline
\end{tabular}
\end{table}



\begin{table}[!htp]
\centering
\caption{Comparison of the predicted bound and the actual $\ell_{\infty}$ error for the \textit{Barbara} image at $\sigma_s=3$ and $\sigma_r=30$.}
\label{table4}
\begin{tabular}{|c |p{0.8cm}|p{0.8cm}|p{0.8cm}|p{0.8cm}|p{0.8cm}|p{0.8cm}|}
 \hline
$\varepsilon$   &1e-8 &1e-5  &1e-4 &1e-3  &0.01    \\ \hline
$N$   &15 &12  &11 &10  &8    \\ \hline
Actual $\eqref{err_infinity}$ &2.7e-8 &1.1-4 &9e-4 &0.01 &0.3   \\ \hline
Predicted $\eqref{accuracy}$ &2.4e-4 &0.2 &2.5 &29.5 &561  \\ \hline
\end{tabular}
\end{table} 

\section{Acknowledgement}

This work was supported by the Startup Grant awarded by the Indian Institute of Science. The authors would like to thank the anonymous reviewers for their comments and suggestions.

\section{Appendix}

In this section, we outline the main steps in the derivation of \eqref{accuracy}. We write \eqref{BF} as $f_{\mathrm{BF}}(\i)=  P_1(\i)/Q_1(\i)$, where 
\begin{equation*}
P_1(\i)=\sum_{\j \in \Omega} w(\j)  \phi \big(f(\i-\j)-f(\i) \big) f(\i-\j),
\end{equation*}
and
\begin{equation*}
Q_1(\i)= \sum_{\j \in \Omega} w(\j) \phi \big(f(\i-\j)-f(\i) \big).
\end{equation*}
Similarly, we write \eqref{SBF} as $\hat{f}_{\mathrm{BF}}(\i) = P_2(\i)/Q_2(\i)$, where 
\begin{equation*}
P_2(\i)=\sum_{\j \in \Omega} w(\j) \varphi_N \big(f(\i-\j)-f(\i) \big)  f(\i-\j),
\end{equation*}
and
\begin{equation*}
Q_2(\i)= \sum_{\j \in \Omega} w(\j) \varphi_N \big(f(\i-\j)-f(\i) \big) .
\end{equation*}
Then $f_{\mathrm{BF}} (\i) -  \hat{f}_{\mathrm{BF}}  (\i)$ can be expressed as
\begin{equation}
\label{diff}
\frac{1}{Q_2(\i)} \Big[f_{\mathrm{BF}}(\i) \big(Q_2(\i)-Q_1(\i)\big) + P_1(\i)-P_2(\i) \Big].
\end{equation}
From  \eqref{eps}, we have  $\lVert Q_1 - Q_2 \rVert_{\infty}\leq \varepsilon$.
On the other hand, note that $\lVert  f_{\mathrm{BF}}  \rVert_{\infty}  \leq T$. 
This is because $f_{\mathrm{BF}} (\i)$ is given by the convex combination of $\{f(\i- \j) : \j \in \Omega\}$. 
Therefore, from \eqref{eps}, we get $\lVert P_1 -P_2 \rVert_{\infty}\leq T \varepsilon$.

To obtain a lower-bound for $Q_2(\i)$ in \eqref{diff}, we note that
\begin{equation*}
Q_1(\i) = w(0) \varphi(0) + \text{positive terms} \geq w(0),
\end{equation*}
where we have used the non-negativity of the range and spatial kernels. Therefore, using the inverse triangle inequality, we get
\begin{equation*}
\label{lb}
|Q_2(\i)| \geq Q_1(\i) -  |Q_2(\i) - Q_1(\i)| \geq   w(0) - \varepsilon.
\end{equation*}
By incorporating the above bounds into \eqref{diff}, we get \eqref{accuracy}.


\begin{thebibliography}{9}
 
 \bibitem{Tomasi1998} C. Tomasi and R. Manduchi, ``Bilateral filtering for gray and color images,'' \textit{Proc. IEEE International Conference on Computer Vision}, pp. 839-846, 1998.

\bibitem{Bennett2005} E. P. Bennett and L. McMillan, ``Video enhancement using per-pixel virtual exposures,'' \textit{ACM Transactions on Graphics}, vol. 24, no. 3, pp. 845-852, \textit{Proc. ACM Siggraph}, 2005.


\bibitem{Winnemoller2006} H. Winnemoller, S. C. Olsen, and B. Gooch, ``Real-time video abstraction,'' \textit{Proc. ACM Siggraph}, pp. 1221-1226, 2006.

\bibitem{Xiao2006} J. Xiao, H. Cheng, H. Sawhney, C. Rao, and M. Isnardi, ``Bilateral filtering-based optical flow estimation with occlusion detection,'' \textit{Proc.  European Conference on Computer Vision}, pp. 211-224, 2006.

\bibitem{denoise1} E. P. Bennett and L. McMillan, ``Video enhancement using per-pixel virtual exposures," \textit{ACM Transactions on Graphics}, vol.  24, no. 3, pp 845-852, 2005.

\bibitem{CR2015} K. N. Chaudhury and K. Rithwik, ``Image denoising using optimally weighted bilateral filters: A SURE and fast approach,'' \textit{Proc. IEEE International Conference on Image Processing}, pp. 108-112, 2015.

\bibitem{contrast} B. M. Oh, M. Chen, J. Dorsey, and F. Durand, ``Image-based modeling and photo editing," \textit{Proc. Annual Conference on Computer 
Graphics and Interactive Techniques}, pp. 433-442, 2001.


\bibitem{optflow} J. Xiao, H. Cheng, H. Sawhney, C. Rao, and M. Isnardi, ``Bilateral filtering-based optical flow estimation with occlusion detection," \textit{Proc. European Conference on Computer Vision}, pp. 211-224, Springer, 2006.

\bibitem{Gunturk2011} B. K. Gunturk, ``Fast bilateral filter with arbitrary range and domain kernels," \textit{IEEE Transactions on Image Processing}, vol. 20, no. 9, pp. 2690-2696, 2011.

\bibitem{Mirbach2012} K. Al-Ismaeil, D. Aouada, B. Ottersten, and B. Mirbach, ``Bilateral filter evaluation based on exponential kernels," \textit{International Conference on Pattern Recognition}, pp. 258-261, 2012.

\bibitem{Yang2014} Q. Yang, ``Hardware-efficient bilateral filtering for stereo matching," \textit{IEEE Transactions on Pattern Analysis and Machine Intelligence}, vol. 36, no. 5, pp.1026-1032, 2014.

\bibitem{Durand2002} F. Durand and J. Dorsey. ``Fast bilateral filtering for the display of high-dynamic-range images,'' \textit{ACM Transactions on Graphics}, vol. 21, no. 3, pp. 257-266, 2002.

\bibitem{Yang2009} Q. Yang, K. H. Tan, and N. Ahuja, ``Real-time $O (1)$ bilateral filtering,'' \textit{Proc. IEEE Conference on  Computer Vision and Pattern Recognition}, pp. 557-564, 2009. 

\bibitem{Paris2006} S. Paris and F. Durand, ``A fast approximation of the bilateral filter using a signal processing approach,'' \textit{Proc. European Conference on Computer Vision}, pp. 568-580, 2006.

\bibitem{Porikli2008} F. Porikli, ``Constant time $O (1)$ bilateral filtering,'' \textit{Proc. IEEE Conference on Computer Vision and Pattern Recognition}, pp. 1-8, 2008.

\bibitem{Chaudhury2011} K. N. Chaudhury, D. Sage, and M. Unser, ``Fast $O(1)$ bilateral filtering using trigonometric range kernels,'' \textit{IEEE Transactions on Image Processing}, vol. 20, no. 12, pp. 3376-3382, 2011.

\bibitem{Chaudhury2015}  K. N. Chaudhury, ``Fast and accurate bilateral filtering using Gauss-polynomial decomposition,'' \textit{Proc. IEEE International Conference on Image Processing}, pp. 2005-2009, 2015.

\bibitem{An2015}  S. An, F. Boussaid, M. Bennamoun, and F. Sohel, ``Quantitative error analysis of bilateral filtering," \textit{IEEE Signal Processing Letters}, vol. 22, no. 2, pp. 202-206, 2015.

\bibitem{Chaudhury2011a} K. N. Chaudhury, ``Constant-time filtering using shiftable kernels," \textit{IEEE Signal Processing Letters}, vol. 18, no. 11, pp. 651 - 654, 2011. 
 
\bibitem{Kamata2015} K. Sugimoto and S. I. Kamata, ``Compressive bilateral filtering," \textit{IEEE Transactions on Image Processing}, vol. 24, no. 11, pp. 3357-3369, 2015.
 

 \bibitem{Deriche1993} R. Deriche, ``Recursively implementing the Gaussian and its derivatives, \textit{Research Report}, INRIA-00074778, 1993.
 
 \bibitem{Chaudhury2013}  K. N. Chaudhury, ``Acceleration of the shiftable algorithm for bilateral filtering and nonlocal  means," \textit{IEEE Transactions on Image Processing}, vol. 22, no. 4, pp. 1291-1300, 2013.
 
\bibitem{Grafakos} L. Grafakos, \textit{Classical Fourier Analysis}, vol. 2, New York: Springer, 2008.

\bibitem{Dummel} J. Demmel, \textit{Applied Numerical Linear Algebra}, Society for Industrial and Applied Mathematics, 1997.

\end{thebibliography}
\end{document}